\newtheorem{prop}{Proposition}
\title{Joint Stochastic Approximation and Its Application to Learning Discrete Latent Variable Models}
\author{ {\bf Zhijian Ou} \\
Electronic Engineering Department\\
Tsinghua University, Beijing, China\\
\And
{\bf Yunfu Song}  \\
Electronic Engineering Department\\
Tsinghua University, Beijing, China\\
}
\begin{document}
\maketitle

% The \author macro works with any number of authors. There are two commands
% used to separate the names and addresses of multiple authors: \And and \AND.
%
% Using \And between authors leaves it to \LaTeX{} to determine where to break
% the lines. Using \AND forces a linebreak at that point. So, if \LaTeX{}
% puts 3 of 4 authors names on the first line, and the last on the second
% line, try using \AND instead of \And before the third author name.

\newcommand{\fix}{\marginpar{FIX}}
\newcommand{\new}{\marginpar{NEW}}

%\iclrfinalcopy % Uncomment for camera-ready version

%\begin{document}
%\maketitle

\begin{abstract}
Although with progress in introducing auxiliary amortized inference models, learning discrete latent variable models is still challenging. In this paper, we show that the annoying difficulty of obtaining reliable stochastic gradients for the inference model and the drawback of indirectly optimizing the target log-likelihood can be gracefully addressed in a new method based on stochastic approximation (SA) theory of the Robbins-Monro type.
Specifically, we propose to directly maximize the target log-likelihood and simultaneously minimize the inclusive divergence between the posterior and the inference model.
The resulting learning algorithm is called joint SA (JSA).
To the best of our knowledge, JSA represents the first method that couples an SA version of the EM (expectation-maximization) algorithm (SAEM) with an adaptive MCMC procedure.
Experiments on several benchmark generative modeling and structured prediction tasks show that JSA consistently outperforms recent competitive algorithms, with faster convergence, better final likelihoods, and lower variance of gradient estimates.
\end{abstract}

\section{INTRODUCTION}

A wide range of machine learning tasks involves observed and unobserved data.
Latent variable models explain observations as part of a partially observed system and usually express a joint distribution $p_\theta(x,h)$ over observation $x$ and its unobserved counterpart $h$, with parameter $\theta$.
Models with discrete latent variables are broadly applied, including mixture modeling, unsupervised learning \citep{SBN,nvil}, structured output prediction \citep{Raiko2014TechniquesFL,vimco} and so on.

Currently variational methods are widely used for learning latent variable models, especially those parameterized using neural networks.
In such methods, an auxiliary amortized inference model $q_\phi(h|x)$ with parameter $\phi$ is introduced to approximate the posterior $p_\theta(h|x)$ \citep{vae,Rezende2014StochasticBA}, and some bound of the marginal log-likelihood, used as a surrogate objective, is optimized over both $\theta$ and $\phi$. Two well-known bounds are the evidence lower bound (ELBO) \citep{ELBO} and the multi-sample importance-weighted (IW) lower bound \citep{IWAE}.
Though with progress (as reviewed in Section \ref{sec:related-work}), a difficulty in variational learning of discrete latent variable models is to obtain reliable (unbiased, low-variance) Monte-Carlo estimates of the gradient of the bound with respect to (w.r.t.) $\phi$\footnote{since the gradient of the ELBO w.r.t. $\theta$ usually can be reliably estimated.}.

Additionally, a common drawback in many existing methods for learning latent-variable models is that they indirectly optimize some bound of the target marginal log-likelihood. This leaves an uncontrolled gap between the marginal log-likelihood and the bound, depending on the expressiveness of the inference model.
There are hard efforts to develop more expressive but increasingly complex inference models to reduce the gap \citep{Salimans2014BridgingGap,Rezende2015NormalizingFlows,Maale2016AuxiliaryDG,Kingma2017ImprovedVI}\footnote{Notably, some methods are mainly applied to continuous latent variables, e.g. it is challenging to directly apply normalizing flows to discrete random variables, though recently there are some effort \citep{Ziegler2019LatentNF}.}.
But it is highly desirable that we can eliminate the effect of the gap on model learning or ideally directly optimize the marginal log-likelihood, without increasing the complexity of the inference model.

In this paper, we show that the annoying difficulty of obtaining reliable stochastic gradients for the inference model and the drawback of indirectly optimizing the target log-likelihood can be gracefully addressed in a new method based on stochastic approximation (SA) theory of the Robbins-Monro type \citep{SA51}.
These two seemingly unrelated issues are inherently related to our choice that we optimizes the ELBO or the similar IW lower bound.

Specifically, we propose to directly maximize w.r.t. $\theta$ the marginal log-likelihood\footnote{``directly'' is in the sense that we set the marginal log-likelihood as the objective function in stochastic optimization.} and simultaneously minimize w.r.t. $\phi$ the \emph{inclusive} divergence\footnote{$KL[p_\theta||q_\phi] \triangleq \int p_\theta \log \left( p_\theta / q_\phi \right)$} $KL[p_\theta(h|x)||q_\phi(h|x)]$ between the posterior and the inference model, and fortunately, we can use the SA framework to solve the joint optimization problem.
The key is to recast the two gradients as expectations and equate them to zero; then the equations can be solved by applying the SA algorithm, in which the inference model serves as an adaptive proposal for constructing the Markov Chain Monte Carlo (MCMC) sampler.
The resulting learning algorithm is called joint SA (JSA), as it couples SA-based model learning and SA-based adaptive MCMC and jointly finds the two sets of unknown parameters ($\theta$ and $\phi$).
%The theory of SA with Markovian perturbations can be used to establish the convergence of the JSA method.
%The convergence of the JSA algorithm is obtained by the known convergence property from the general SA theory.

It is worthwhile to recall that there is an another class of methods in learning latent variable models for maximum likelihood (ML), even prior to the recent development of variational learning for approximate ML, which consists of the expectation-maximization (EM) algorithm \citep{Dempster1977MaximumLF} and its extensions.
Interestingly, we show that the JSA method amounts to coupling an SA version of EM (SAEM) \citep{delyon1999convergence,kuhn2004coupling} with an adaptive MCMC procedure. This represents a new  extension among the various stochastic versions of EM in the literature. This revealing of the connection between JSA and SAEM is important for us to appreciate the new JSA method.

The JSA learning algorithm can handle both continuous and discrete latent variables. The application of JSA in the continuous case is not pursued in the present work, and we leave it as an avenue of future exploration.
In this paper, we mainly present experimental results for learning discrete latent variable models with Bernoulli and categorical variables, consisting of stochastic layers or neural network layers.
Our results on several benchmark generative modeling and structured prediction tasks demonstrate that JSA consistently outperforms recent competitive algorithms, with faster convergence, better final likelihoods, and lower variance of gradient estimates.

\section{PRELIMINARIES}

\subsection{STOCHASTIC APPROXIMATION (SA)}

\begin{algorithm}[tb]
	\caption{The general stochastic approximation (SA) algorithm}\label{alg:SA}
	\begin{algorithmic}	
		\FOR {$t=1,2,\cdots$}
		\STATE \underline{Monte Carlo sampling:} Draw a sample $z^{(t)}$ with a Markov transition kernel $K_{\lambda^{(t-1)}}(z^{(t-1)},\cdot)$, which starts with $z^{(t-1)}$ and admits $p_{\lambda^{(t-1)}}(\cdot)$ as the invariant distribution.
		\STATE \underline{SA updating:} Set $\lambda^{(t)} = \lambda^{(t-1)} + \gamma_t F_{\lambda^{(t-1)}}(z^{(t)})$, where $\gamma_t$ is the learning rate.
		\ENDFOR
	\end{algorithmic}
\end{algorithm}

Stochastic approximation methods are an important family of iterative stochastic optimization algorithms, introduced in \citep{SA51} and extensively studied \citep{benveniste2012adaptive,chen2002stochastic}.
Basically, stochastic approximation provides a mathematical framework for stochastically solving a root finding problem, which has the form of expectations being equal to zeros.
Suppose that the objective is to find the solution $\lambda^*$ of $f(\lambda) = 0$ with
\begin{equation}
\label{eq:SA}
f(\lambda) = E_{z \sim p_\lambda(\cdot) } [ F_\lambda(z) ],
\end{equation}
where $\lambda$ is a $d$-dimensional parameter vector, and $z$ is an observation from a probability distribution $p_\lambda(\cdot)$ depending on $\lambda$.
$F_\lambda(z) \in R^d $ is a function of $z$, providing $d$-dimensional stochastic measurements of the so-called mean-field function $f(\lambda)$.
Intuitively, we solve a system of simultaneous equations, $f(\lambda) = 0$, which consists of $d$ constraints, for determining $d$-dimensional $\lambda$.

Given some initialization $\lambda^{(0)}$ and $z^{(0)}$, a general SA algorithm iterates Monte Carlo sampling and parameter updating, as shown in Algorithm \ref{alg:SA}.
The convergence of SA has been established under conditions \citep{benveniste2012adaptive, andrieu2005stability, song2014weak}, including a few technical requirements for the mean-field function $f(\lambda)$, the Markov transition kernel $K_{\lambda^{(t-1)}}(z^{(t-1)},\cdot)$ and the learning rates.
Particularly, when $f(\lambda)$ corresponds to the gradient of some objective function, then $\lambda^{(t)}$ will converge to local optimum, driven by stochastic gradients  $F_\lambda(z)$.
To speed up convergence, during each SA iteration, it is possible to generate a set of multiple observations $z$ by performing the Markov transition repeatedly 
and then use the average of the corresponding values of $F_\lambda(z)$ for updating $\lambda$, which is known as SA with multiple moves \citep{Wang2017LearningTR}, as shown in Algorithm \ref{alg:SA-multiple-move} in Appendix.

Remarkably, Algorithm \ref{alg:SA} shows stochastic approximation with Markovian perturbations \citep{benveniste2012adaptive}. It is more general than the non-Markovian SA which requires exact sampling $z^{(t)} \sim p_{\lambda^{(t-1)}}(\cdot)$ at each iteration and in some tasks can hardly be realized.
In non-Markovian SA, we check that $F_\lambda(z)$ is unbiased estimates of $f(\lambda)$, while in SA with Markovian perturbations, we check the ergodicity property of the Markov transition kernel.

\subsection{VARIATIONAL LEARNING METHODS} \label{sec:vae}
Here we briefly review the variational methods, recently developed for learning latent variable models \citep{vae,Rezende2014StochasticBA}.
Consider a latent variable model $p_\theta(x,h)$ for observation $x$ and latent variable $h$, with parameter $\theta$.
Instead of directly maximizing the marginal log-likelihood $\log p_\theta(x)$ for the above latent variable model, variational methods maximize the variational lower bound (also known as ELBO), after introducing an auxiliary amortized inference\footnote{$q_\phi(h|x)$ uses a single, global set of parameters $\phi$ over the entire training set, which is called amortized inference.} model $q_\phi(h|x)$:
\begin{displaymath}
\begin{split}
ELBO(\theta,\phi;x) &\triangleq E_{q_\phi(h|x)}\log \frac{p_\theta(x,h)}{q_\phi(h|x)}\\
&=\log p_\theta(x) - KL\left[ q_\phi(h|x) || p_\theta(h|x) \right]
\end{split}
\end{displaymath}
It is known that maximizing the ELBO w.r.t. ${\phi}$ amounts to minimize the exclusive KL-divergence $KL[q_{{\phi}}({h}|{x})|| p_{{\theta}}({h}|{x})]$, which has the annoying effect of high variance in estimating gradients mentioned before.

\section{METHOD}

\subsection{DEVELOPING JSA}

Consider a latent variable model $p_\theta(x,h)$ for observation $x$ and latent variable $h$, with parameter $\theta$.
Like in variational methods, 
we also jointly train the target model $p_\theta(x,h)$ together with an auxiliary amortized inference model $q_\phi(h|x)$.
The difference is that we propose to directly maximize w.r.t. $\theta$ the marginal log-likelihood and simultaneously minimizes w.r.t. $\phi$ the inclusive KL divergence $KL(p_\theta(h|x)||q_\phi(h|x))$ between the posterior and the inference model, pooled over the empirical dataset:
\begin{equation}
\label{eq:JSA_unsup_obj}
\left\{
\begin{split}
& \min_{\theta} KL\left[ \tilde{p}(x) || p_\theta(x) \right] \\
& \min_{\phi} E_{\tilde{p}(x)} KL\left[ p_\theta(h|x)|| q_\phi(h|x) \right] \\
\end{split}
\right.
\end{equation}
where $\tilde{p}(x) \triangleq \frac{1}{n} \sum_{i=1}^{n} \delta(x-x_i)$ denotes the empirical distribution for a training dataset consisting of $n$ independent and identically distributed (IID) data points $\left\lbrace x_1, \cdots, x_n \right\rbrace $.
In such a way, we pursue direct maximum likelihood estimation of $\theta$ and at the same time avoid the high-variance gradient estimate w.r.t. $\phi$.

Fortunately, we can use the SA framework to solve the joint optimization problem Eq.(\ref{eq:JSA_unsup_obj}) as described below.
First, it can be shown that the gradients for optimizing the two objectives in Eq. (\ref{eq:JSA_unsup_obj}) can be derived as follows\footnote{The first equation in Eq.(\ref{eq:SA}) directly follows from the Fisher identity Eq.(\ref{eq:fisher}).}:
\begin{equation}
\label{eq:JSA_unsup_gradient}
\left\{
\begin{split}
g_\theta \triangleq -\nabla_\theta KL[ \tilde{p}&(x) || p_\theta(x) ]\\
&=E_{\tilde{p}(x) p_\theta(h|x)}\left[\nabla_\theta logp_\theta(x,h)\right] \\
g_\phi \triangleq -\nabla_\phi  E_{\tilde{p}(x)} &KL\left[ p_\theta(h|x)|| q_\phi(h|x) \right]\\
&=E_{\tilde{p}(x) p_\theta(h|x)}\left[ \nabla_\phi logq_\phi(h|x)\right]
\end{split}
\right.
\end{equation}
By the following Proposition \ref{prop:jsa_expectation_form}, Eq.(\ref{eq:JSA_unsup_gradient}) can be recast in the expectation form of Eq.(\ref{eq:SA}).
The optimization problem can then be solved by setting the gradients to zeros and applying the SA algorithm to finding the root for the resulting system of simultaneous equations.
\begin{prop} \label{prop:jsa_expectation_form}
	The gradients w.r.t. $\theta$ and $\phi$ as in Eq.(\ref{eq:JSA_unsup_gradient}) can be recast in the expectation form of Eq.(\ref{eq:SA}) (i.e. as expectation of stochastic gradients), by letting $\lambda \triangleq (\theta, \phi)^T$, $z \triangleq (\kappa, h_1,\cdots,h_n)^T$, $p_\lambda(z) \triangleq \frac{1}{n} \prod_{i=1}^{n} p_\theta(h_i|x_i)$, $f(\lambda) \triangleq (g_\theta, g_\phi)^T$, and
	\begin{displaymath}
	F_\lambda(z) \triangleq \left( \begin{array}{c}
	\sum_{i=1}^{n} \delta(\kappa=i) \nabla_\theta logp_\theta(x_i,h_i) \\
	\sum_{i=1}^{n} \delta(\kappa=i) \nabla_\phi logq_\phi(h_i|x_i)
	\end{array} \right).
	\end{displaymath}
	In order to avoid visiting all $h_1,\cdots,h_n$ at every SA iteration (to be explained below), we introduce an index variable $\kappa$ which is uniformly distributed over $1,\cdots,n$. 
	$\delta(\kappa=i)$ denotes the indicator of $\kappa$ taking $i$.
\end{prop}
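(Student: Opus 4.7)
The plan is to verify, by direct computation, that with the prescribed choices of $z$, $p_\lambda$ and $F_\lambda$ one has $E_{z\sim p_\lambda(\cdot)}[F_\lambda(z)] = (g_\theta,g_\phi)^T = f(\lambda)$, so that Eq.(\ref{eq:JSA_unsup_gradient}) indeed fits the template Eq.(\ref{eq:SA}).

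First I would unpack $p_\lambda(z)$: reading the formula $\frac{1}{n}\prod_{i=1}^n p_\theta(h_i|x_i)$ as the product of the uniform pmf of $\kappa$ on $\{1,\ldots,n\}$ and the product distribution over the latent batch $(h_1,\ldots,h_n)$, the index $\kappa$ is independent of the $h_i$'s, and each $h_i$ has marginal $p_\theta(h_i|x_i)$.

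Next I would compute the $\theta$-block of $E[F_\lambda(z)]$ by linearity of expectation and by conditioning on $\kappa$:
\begin{displaymath}
E\!\left[\sum_{i=1}^n \delta(\kappa=i)\,\nabla_\theta \log p_\theta(x_i,h_i)\right]
= \sum_{i=1}^n \Pr(\kappa=i)\,E_{p_\theta(h_i|x_i)}\!\left[\nabla_\theta \log p_\theta(x_i,h_i)\right],
\end{displaymath}
using that the $i$-th summand depends only on $h_i$ so the remaining $h_j$'s marginalize out to $1$. Substituting $\Pr(\kappa=i)=1/n$ and recognizing $\frac{1}{n}\sum_{i=1}^n(\cdot) = E_{\tilde{p}(x)}[\cdot]$, the right-hand side collapses to $E_{\tilde{p}(x)p_\theta(h|x)}[\nabla_\theta \log p_\theta(x,h)] = g_\theta$ by Eq.(\ref{eq:JSA_unsup_gradient}). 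The $\phi$-block is formally identical with $\nabla_\phi \log q_\phi(h_i|x_i)$ in place of $\nabla_\theta \log p_\theta(x_i,h_i)$, and yields $g_\phi$.

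There is no real technical obstacle; the content of the proposition is bookkeeping. The conceptually load-bearing step is the introduction of the auxiliary index $\kappa$: it rewrites the empirical average $\frac{1}{n}\sum_i$ as an honest expectation over a single random variable, so that the whole joint gradient conforms to the single-distribution template required by Algorithm \ref{alg:SA}. Without $\kappa$ one would be forced to simulate all $n$ latent configurations $h_1,\ldots,h_n$ at each SA step; with $\kappa$, the indicator $\delta(\kappa=i)$ kills every summand except the one at the sampled datum, which is precisely what makes per-iteration, single-datapoint (or mini-batched) stochastic updates possible.
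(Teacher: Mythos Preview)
Your proposal is correct and follows essentially the same route as the paper's proof: rewrite $g_\theta,g_\phi$ as the per-datapoint averages $\frac{1}{n}\sum_i E_{p_\theta(h_i|x_i)}[\cdot]$, then use $E[\delta(\kappa=i)]=\frac{1}{n}$ together with the independence of $\kappa,h_1,\ldots,h_n$ to recover these averages from $E_{z\sim p_\lambda}[F_\lambda(z)]$. Your write-up simply spells out the marginalization and the role of $\kappa$ in more detail than the paper's one-line argument.
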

\begin{proof}
	This can be readily seen by rewriting Eq.(\ref{eq:JSA_unsup_gradient}) as:
	\begin{equation} \label{eq:JSA_unsup_gradient_batch}
	\left\{
	\begin{split}
	& g_\theta = \frac{1}{n} \sum_{i=1}^{n} E_{p_\theta(h_i|x_i)}\left[\nabla_\theta logp_\theta(x_i,h_i)\right]\\
	& g_\phi = \frac{1}{n} \sum_{i=1}^{n} E_{p_\theta(h_i|x_i)}\left[ \nabla_\phi logq_\phi(h_i|x_i)\right]
	\end{split}
	\right.
	\end{equation}
	and applying $E [\delta(\kappa=i)] = \frac{1}{n}$ and the independence between $\kappa, h_1,\cdots,h_n$.
\end{proof}

Recall that as defined in Proposition \ref{prop:jsa_expectation_form}, $z$ consists of $n+1$ independent components, $\kappa$, and $h_1,\cdots,h_n$.
At iteration $t$, the SA algorithm need to draw sample $z^{(t)} \sim p_{\lambda^{(t-1)}}(z)$ either directly or through a Markov transition kernel, and then to update parameters based on stochastic gradients $F_{\lambda^{(t-1)}}(z^{(t)})$ calculated using this sample $z^{(t)}$.
%To apply the SA algorithm (Algorithm \ref{alg:SA}), we need to construct a Markov transition kernel $K_{\lambda}(z^{(t-1)},\cdot)$ that admits $p_\lambda(z)$ as the invariant distribution.
The introduction of $\kappa$ allows us to calculate the stochastic gradients for only one $h_i$, indexed by $\kappa$, instead of calculating over the full batch of all $n$ data points (as originally suggested by Eq.(\ref{eq:JSA_unsup_gradient_batch})).
Minibatching (i.e. drawing a subset of data points) can be achieved by running SA with multiple moves, once the SA procedure with only one $h_i$ is established (see Appendix \ref{sec:additional-appendix} for details).

The introduction of $\kappa$ also allows us to avoid drawing a new full set of $h_1,\cdots,h_n$ in sampling $z^{(t)} \sim p_{\lambda^{(t-1)}}(z)$ at every SA iteration.
Suppose that we can construct a base transition kernel for each $h_i$, denoted by $B_{\lambda,i}(h_i^{(t-1)},\cdot)$, which admits $p_\theta(h_i|x_i)$ as the invariant distribution, $i=1,\cdots,n$.
Specifically, we can first draw $\kappa$ uniformly over $1,\cdots,n$, and then draw $h_\kappa$ by applying $B_{\lambda,\kappa}(h_\kappa^{(t-1)},\cdot)$ and leave the remaining components $h_1,\cdots,h_{\kappa-1}, h_{\kappa+1},\cdots,h_n$ unchanged.
%A simple procedure to construct $K_{\lambda}(z^{(t-1)},\cdot)$ is to sample all the $n+1$ components independently, applying $B_{\lambda,i}(h_i^{(t-1)},\cdot)$ for the $i$-th component. This is computational expensive, since it means all training data points are visited at every SA iteration.
This is a special instance\footnote{We just sample according to the marginal distribution of $h_i, i=1,\cdots,n$, instead of the conditional distribution of $h_i$, since they are independent.} of random-scan Metropolis-within-Gibbs sampler \citep{Fort2003OnTG}.

For designing the base transition kernel $B_{\lambda,i}(h_i^{(t-1)},\cdot)$, $i=1,\cdots,n$, we propose to utilize the auxiliary inference model $q_\phi(h_i|x_i)$ as a proposal and use the Metropolis independence sampler (MIS) \citep{LiuJun2008}, targeting $p_\theta(h_i|x_i)$.
Given current sample $h_i^{(t-1)}$, MIS works as follows:
\begin{enumerate}
	\item Propose $h_i \sim q_\phi(h_i|x_i)$;
	\item Accept $h_i^{(t)}=h_i$ with prob.
	$	min\left\lbrace 1,  \frac{w(h_i)}{w( h_i^{(t-1)})} \right\rbrace$,
\end{enumerate}
where $w(h_i) = \frac{p_\theta(h_i|x_i)}{q_\phi(h_i|x_i)}$ is the importance sampling weight.
Remarkably, we can cancel out the intractable $p_{\theta}(x_i)$ which appears in both the numerator $w(h_i)$ and denominator $w(h^{(t-1)}_i)$. 
The Metropolis-Hastings (MH) ratio is then calculated as $\frac{p_\theta(h_i,x_i)}{q_\phi(h_i|x_i)} / \frac{p_\theta(h^{(t-1)}_i,x_i)}{q_\phi(h^{(t-1)}_i|x_i)}$.

Finally, the JSA algorithm is summarized in Algorithm \ref{alg:JAE}.

\begin{algorithm}[tb]
	\caption{The JSA algorithm}
	\label{alg:JAE}
	\begin{algorithmic}
		\REPEAT
		\STATE \underline{Monte Carlo sampling:}\\
		Draw $\kappa$ over $1,\cdots,n$, pick the data point $x_\kappa$ along with the cached $h^{(old)}_\kappa$, and use MIS to draw $h_\kappa$;
		\STATE \underline{SA updating:}\\
		Update $\theta$ by ascending: $\nabla_\theta logp_\theta(x_\kappa,h_\kappa)$;\\
		Update $\phi$ by ascending: $\nabla_\phi logq_\phi(h_\kappa|x_\kappa)$;
		\UNTIL{convergence}
	\end{algorithmic}
\end{algorithm}

\subsection{CONNECTING JSA TO MCMC-SAEM}
In this section, we reveal that the JSA algorithm amounts to coupling an SA version of EM (SAEM) \citep{delyon1999convergence,kuhn2004coupling} with an adaptive MCMC procedure.
To simplify discussion, we consider learning with a single training data point. Learning with a set of IID training data points can be similarly analyzed.

The EM algorithm is an iterative method to find maximum likelihood estimates of parameters for latent variable models (or in statistical terminology, from incomplete data). At iteration $t$, the E-step calculates the Q-function $Q(\theta|\theta^{(t-1)}) =  E_{p_{\theta^{(t-1)}}(h|x)}\left[ \nabla_\theta logp_\theta(x,h)\right]$ and the M-step updates $\theta$ by maximizing $Q(\theta|\theta^{(t-1)})$ over $\theta$ or performing gradient ascent over $\theta$ when a closed-form solution is not available. 
In the E-step, when the expectation in $Q(\theta|\theta^{(t-1)})$ cannot be tractably evaluated, SAEM has been developed \citep{delyon1999convergence}.

SAEM can be readily obtained from the Fisher identity (see Appendix \ref{sec:Fisher} for proof),
\begin{equation}
\label{eq:fisher}
\nabla_\theta log p_\theta(x) = E_{p_\theta(h|x)}\left[ \nabla_\theta logp_\theta(x,h)\right],
\end{equation}
which again can be viewed as in the expectation form of Eq.(\ref{eq:SA}) (i.e. as expectation of stochastic gradients). So we can apply the SA algorithm, which yields SAEM.
It can be seen that the Monte Carlo sampling step in SA fills the missing values for latent variables through sampling $h' \sim p_\theta(h|x)$, which is analogous to the E-step in EM.
The SA updating step performs gradient ascent over $\theta$ using $\nabla_\theta logp_\theta(x,h')$, analogous to the M-step in EM.
%Using this sample, an additional SA update step can be applied to estimate the gradient \citep{Delyon}, the sufficient statistic \citep{Kuhn&Lavielle04} or the Q-function \citep{Delyon}, which is then fed to the M-step to update $\theta$.

When exact sampling from $p_{\theta^{(t-1)}}(h|x)$ is difficult, an MCMC-SAEM algorithm has been developed \citep{kuhn2004coupling}. MCMC-SAEM draws a sample of the latent $h$ by applying a Markov transition kernel which admits $p_{\theta^{(t-1)}}(h|x)$ as the invariant distribution.
Given $\theta^{(t-1)}$, the MCMC step in classic MCMC-SAEM is non-adaptive in the sense that the proposal of the  transition kernel is fixed.
%, but does not adapt the proposal of the  transition kernel.
In contrast, in JSA, the auxiliary amortized inference model $q_\phi(h|x)$ acts like an adaptive proposal, adjusted from past realizations of the Markov chain, so that the Markov transition kernel is adapted.

\section{RELATED WORK}
\label{sec:related-work}

\textbf{Novelty.}
MCMC-SAEM \citep{kuhn2004coupling} and adaptive MCMC \citep{andrieu2006ergodicity} have been separately developed in the SA framework.
To the best of our knowledge, JSA represents the first method that couples MCMC-SAEM with adaptive MCMC, and encapsulate them through a joint SA procedure.
The model learning of $\theta$ and the proposal tuning through $\phi$ are coupled, evolving together to converge (see Appendix \ref{sec:convergence_jsa} for convergence of JSA).
This coupling has important implications for reducing computational complexity (see below) and improving the performance of MCMC-SAEM with adaptive proposals.

Depending on the objective functions used in joint training of the latent variable model $p_\theta$ and the auxiliary inference model $q_\phi$, JSA is also distinctive among existing methods.
A class of methods relies on a single objective function, either the variational lower bound (ELBO) or the IW lower bound, for optimizing both $p_\theta$ and $q_\phi$, but suffers from seeking reliable stochastic gradients for the inference model.
The reweighted wake-sleep (RWS) algorithm uses the IW lower bound for optimizing $p_\theta$ and the inclusive divergence for optimizing $q_\phi$ \citep{rws}.
In contrast, JSA directly optimizes the marginal log-likelihood for $p_\theta$. Though both RWS and JSA use the inclusive divergence for optimizing $q_\phi$, the proposed samples from $q_\phi(h|x)$ are always accepted in RWS, which clearly lacks an accept/reject mechanism by MCMC for convergence guarantees.

Compared to the earlier work \citep{xu2016joint}, this paper is a new development by introducing the random-scan sampler and a formal proof of convergence (Proposition \ref{prop:jsa_expectation_form} and Appendix \ref{sec:convergence_jsa}). A similar independent work pointed out by one of the reviewers is called Markov score climbing (MSC) \citep{naesseth2020markovian} (see Appendix \ref{sec:additional-appendix} for differences between JSA and MSC).

\textbf{Computational complexity.~}
It should be stressed that though we use MCMC to draw samples from the posterior $p_\theta(h|x)$, we do not need to run the Markov chain for sufficiently long time to converge within one SA iteration\footnote{To understand this intuitively, first, note that the inference model is adapted to chase the posterior on the fly, so the proposed samples from $q_\phi(h|x)$ are already good approximate samples for the posterior. Second, the chain could be viewed as running continuously across iterations and dynamically close to the slowly-changing stationary distribution.
}.
This is unlike in applications of MCMC solely for inference.
In learning latent variable models, JSA only runs a few steps of transitions (the same as the number of samples drawn in variational methods) per parameter update and still have parameter convergence. Thus the training time complexity of JSA is close to that of variational methods.

One potential problem with JSA is that we need to cache the sample for latent $h$ per training data point in order to make a persistent Markov Chain.
Thus JSA trades storage for quality of model learning. This might not be restrictive given today's increasingly large and cheap storage.

\textbf{Minimizing inclusive divergence~}
$KL[p_\theta(h|x)||$
$q_\phi(h|x)]$ 
w.r.t. $\phi$ ensures that $q_\phi(h|x)>0$ wherever $p_\theta(h|x)>0$, which is a basic restriction that makes $q_\phi(h|x)$ a valid proposal for MH sampling of $p_\theta(h|x)$. The exclusive divergence is unsuitable for this restriction. 
Inclusive minimization also avoid the annoying difficulty of obtaining reliable stochastic gradients for $\phi$, which is suffered by minimizing the exclusive divergence.

\textbf{Learning discrete latent variable models.~}
In order to obtain reliable (unbiased, low-variance) Monte-Carlo estimates of the gradient of the bound w.r.t. $\phi$, two well-known possibilities are continuous relaxation of discrete variable \citep{concrete,gumbel-softmax}, which gives low-variance but biased gradient estimates, and the REINFORCE method \citep{reinforce}, which yields unbiased but high-variance gradient estimates. Different control variates - NVIL \citep{nvil}, VIMCO \citep{vimco}, MuProp \citep{gu2015muprop}, REBAR \citep{tucker2017rebar}, RELAX \citep{grathwohl2018backpropagation}, are developed to reduce the variance of REINFORCE.
Other recent possibilities include a Rao-Blackwellization procedure \citep{Liu2018RaoBlackwellizedSG}, a finite difference approximation \citep{Lorberbom2018DirectOT}, a gradient reparameterization via Dirichlet noise \citep{ARSM} or sampling without replacement \citep{Kool}.

\textbf{The gap between the marginal log-likelihood and the ELBO} is related to the mismatch between the inference model and the true posterior.
Some studies develop more expressive but increasingly complex inference models, which can also be used in JSA.
To reduce the gap without introducing any additional complexity to the inference model, there are methods to seek tighter bound, e.g. the IW bound in \citep{IWAE}. But it is shown in \citep{rainforth2018tighter} that using tighter bound of this form is detrimental to the process of learning the inference model. 
There are efforts to incorporate MCMC into variational inference to reduce the gap.
\citep{Salimans2014BridgingGap} introduces some auxiliary variables from a Markov chain in defining a tighter ELBO, but with additional neural networks for forward and backward transition operators.
\citep{hoffman2017learning} seeks ML estimate of $\theta$ by performing additional Hamiltonian Monte Carlo (HMC) steps from the proposal given by $q_\phi$, but still estimate $\phi$ by maximizing ELBO. This method only works with continuous latent variables by using HMC and would be limited since minimizing exclusive divergence encourages low entropy inference models which are not good for proposals.
In contrast, JSA is not severely suffered by the mismatch, since although the mismatch affects the sampling efficiency of MIS and SA convergence rate, we still have theoretical convergence to ML estimate.

\textbf{Adaptive MCMC~}
is an active research area. A classic example is adaptive scaling of the variance of the Gaussian proposal in random-walk Metropolis \citep{roberts2009examples}.
Recently, some adaptive MCMC algorithms are analyzed as SA procedures to study their ergodicity properties \citep{andrieu2006ergodicity}.
These algorithms minimize the inclusive divergence between the target distribution and the proposal, but use a mixture of distributions from an exponential family as the adaptive proposal.
The L2HMC \citep{Levy2017GeneralizingHM} learns a parametric leapfrog operator to extend HMC mainly in continuous space, by maximizing expected squared jumped distance.
The auxiliary variational MCMC \citep{habib2018auxiliary} optimizes the proposal by minimizing exclusive divergence.

\section{EXPERIMENTS}
In this section, we evaluate the effectiveness of the proposed JSA method in training models for generative modeling with Bernoulli and categorical variables and structured output prediction with Bernoulli variables. 
\subsection{BERNOULLI LATENT VARIABLES}
\label{sec:generative-ber}
We start by applying various methods to training generative models with Bernoulli (or binary) latent variables,
comparing JSA to REINFORCE \citep{Williams1992SimpleSG}, NVIL \citep{nvil}, RWS \citep{rws}, Concrete \citep{concrete}/Gumbel-Softmax \citep{gumbel-softmax}, REBAR \citep{tucker2017rebar}, VIMCO \citep{vimco}, and ARM\footnote{ARSM with Bernoulli latent variables reduces to ARM.} \citep{arm}.
For JSA, RWS and VIMCO, we use \textit{particle-number} $=2$ (i.e. computing gradient with 2 Monte Carlo samples during training), which yields their theoretical time complexity comparable to ARM.

We follow the model setup in \citep{concrete}, which is also used in \citep{arm}.
For $q_\phi(h|x)$ and $p_\theta(x|h)$, three different network architectures are examined, as summarized in Table \ref{tab:ber-net} in Appendix, including ``Nonlinear'' that has one stochastic but two Leaky-ReLU deterministic hidden layers, ``Linear'' that has one stochastic hidden layer, and ``Linear two layers'' that has two stochastic hidden layers. 
We use the widely used binarized MNIST \citep{salakhutdinov2008quantitative} with the standard training/validation/testing partition (50000/10000/10000), making our experimental results directly comparable to previous results in \citep{rws,arm}.

During training, we use the Adam optimizer with learning rate $0.0003$ and minibatch size 50; during testing, we use 1,000 proposal samples to estimate the negative log-likelihood (NLL) for each data point in the test set, as in \citep{vimco,arm}. This setup is also used in section \ref{sec:generative-cate} but with minibatch size 200.
In all experiments in section \ref{sec:generative-ber}, \ref{sec:generative-cate} and \ref{half}, we run different methods on the training set, calculate the validation NLL for every 5 epochs, and report the test NLL when the validation NLL reaches its minimum within a certain number of epochs\footnote{Specifically, 1,000, 500 and 200 epochs are used for experiments in section \ref{sec:generative-ber}, \ref{sec:generative-cate} and \ref{half} respectively.} (i.e. we apply early-stopping).

For training with JSA, theoretically we need to cache a latent $h$-sample for each training data point.
Practically, our JSA method runs in two stages.
In stage I,  we run without caching, i.e. at each iteration, we accept the first proposed sample from $q_\phi(h|x)$ as an initialization and then run MIS with multiple moves.
After stage I, we switch to running JSA in its standard manner.
The idea is that when initially the estimates of $\theta$ and $\phi$ are far from the root of Eq.(\ref{eq:JSA_unsup_obj}), the sample may not be valuable enough to be cached and large randomness could force the estimates moving fast to a neighborhood around the root.
This two-stage scheme yields fast learning while ensuring convergence.
In this experiment, we use the first 600 epochs as stage I and the remaining 400 epochs as stage II.
See \citep{gu2001maximum,tan2017optimally} for similar two-stage SA algorithms and more discussions.

Table \ref{tab:ber} lists the testing NLL results and Figure \ref{fig:ber-NLL} plots training and testing NLL against training epochs and time.
We observe that JSA significantly outperforms other methods 
with faster convergence and lower NLL for both training and test set.
The training time (or complexity) for JSA is comparable to other methods.
Notably, the NLL of JSA drops around epoch 600 when JSA switches from stage I (no caching) and stage II (caching).
This drop indicates the effectiveness of our two-stage scheme and the empirical benefit of our theoretical development (keeping a persistent chain).
%In the plot of test NLL with the ``Nonlinear'' architecture, JSA (as well as VIMCO) shows some overfitting, a phenomenon also observed in \citep{tucker2017rebar,Kool}. Since our goal is to show improved optimization, we consider regularization a separate issue outside the scope of this paper. 

\begin{table}[tb]\small
%	\caption{Test NLL of different methods with three different network architectures on generative modeling with Bernoulli variables on MNIST, where $*$ denotes the results reported in \citep{arm}, and the others are obtained based on our implementation\protect\footnote{Our Pytorch code and hyperparameters follow \citep{arm}. Some differences are: \citep{arm} uses TensorFlow, runs up to 1200 epochs, monitors the validation NLL every one epoch.
%			Our run saves time and should be no worse under their conditions.}.
%		The mean and standard deviation results are computed over five independent trials with different random seeds.}
	\caption[Test NLL of different methods with three different network architectures on generative modeling with Bernoulli variables on MNIST, where $*$ denotes the results reported in \citep{arm}, and the others are obtained based on our implementation.
	The mean and standard deviation results are computed over five independent trials with different random seeds.]{Test NLL of different methods with three different network architectures on generative modeling with Bernoulli variables on MNIST, where $*$ denotes the results reported in \citep{arm}, and the others are obtained based on our implementation\footnotemark.
		The mean and standard deviation results are computed over five independent trials with different random seeds.}
	\label{tab:ber}
	\begin{tabular}{l|c|c|c}
		\toprule
		Method&Linear&Nonlinear&Two layers\\
		\midrule
		REINFORCE$^*$&170.1&114.1&159.2\\
		RWS&$108.0\pm0.3$&$99.2\pm0.2$&$96.5\pm0.2$\\
		NVIL$^*$&113.1&102.2&99.8\\
		Concrete$^*$&107.2&99.6&95.6\\
		REBAR$^*$&107.7&100.5&95.5\\
		VIMCO&$107.5\pm0.3$&$100.6\pm0.3$&$95.8\pm0.1$\\
		ARM$^*$&$107.2\pm0.1$&$98.4\pm0.3$&$96.7\pm0.3$\\
		JSA&$105.5\pm0.1$&$98.2\pm0.4$&$95.3\pm0.1$\\
		\bottomrule
	\end{tabular}
\end{table}
\footnotetext{Our Pytorch code and hyperparameters follow \citep{arm}. Some differences are: \citep{arm} uses TensorFlow, runs up to 1,200 epochs, and monitors the validation NLL every one epoch.
	Our run saves time and should be no worse under their conditions.}

\begin{figure*}[tb]
	\subfigure[NLL vs training epochs for ``Linear'']{
		\begin{minipage}{0.48\textwidth}
			\centering
			\includegraphics[width=\textwidth]{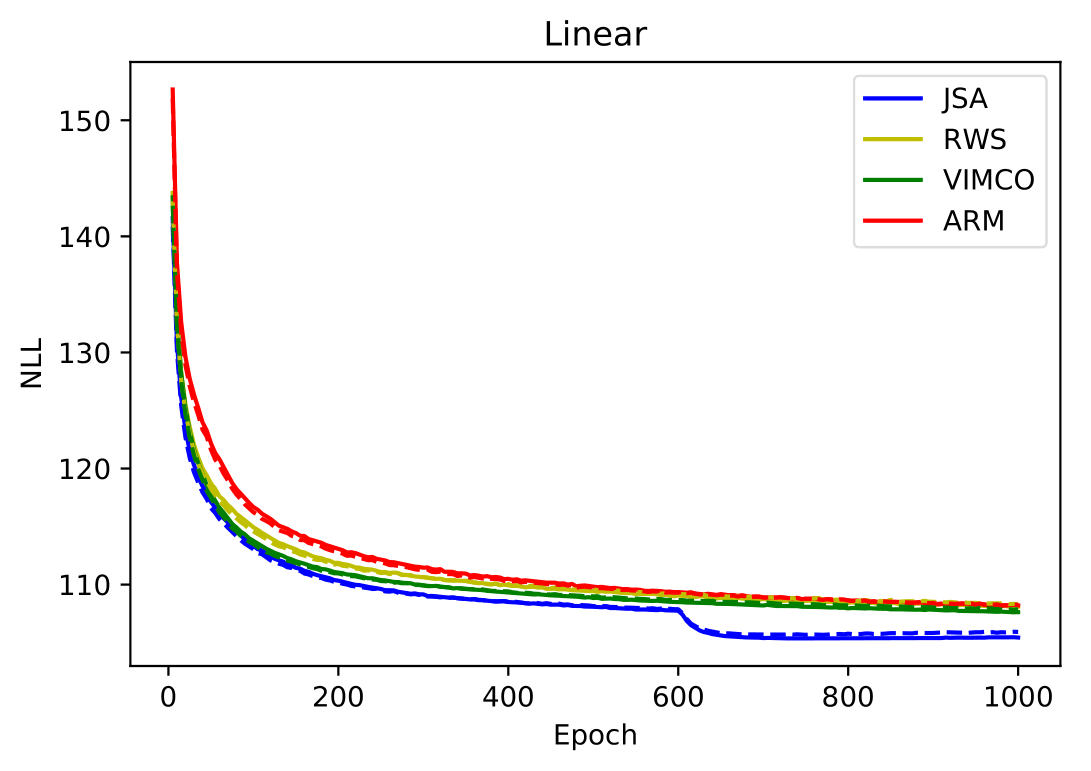} 
	\end{minipage}}
	\subfigure[NLL vs training epochs for ``Nonlinear'']{
		\begin{minipage}{0.48\textwidth}
			\centering
			\includegraphics[width=\textwidth]{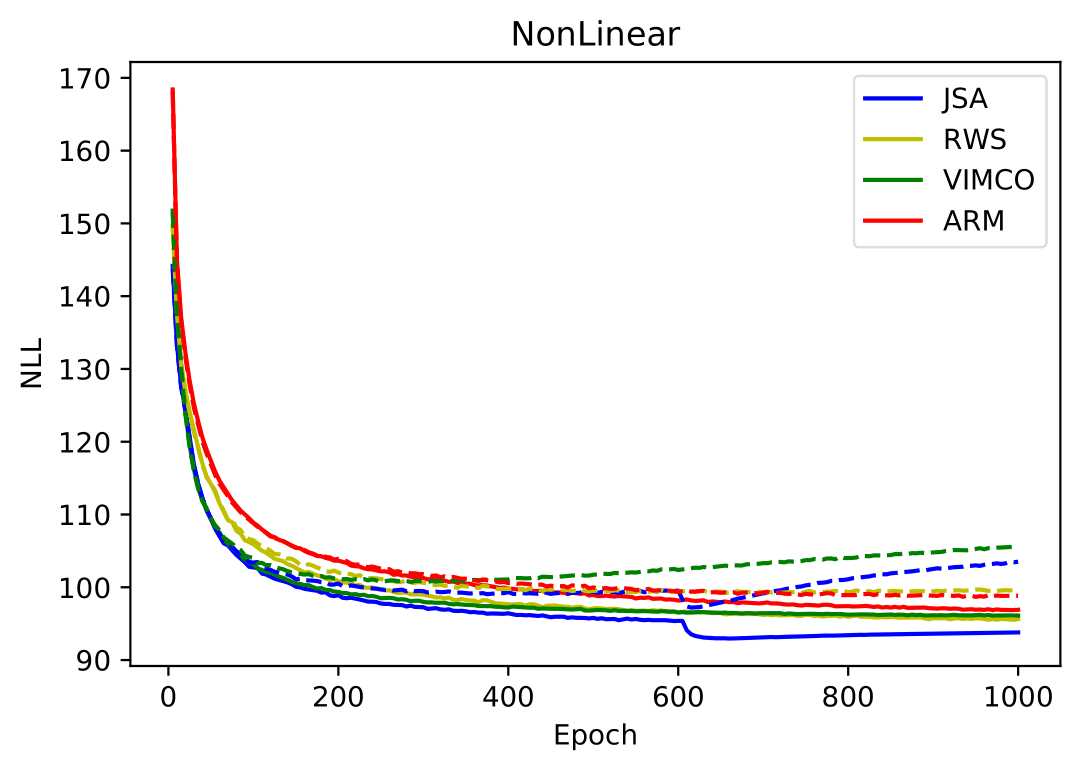} 
	\end{minipage}}
	\subfigure[NLL vs training epochs for ``Two layers'']{
		\begin{minipage}{0.48\textwidth}
			\centering
			\includegraphics[width=\textwidth]{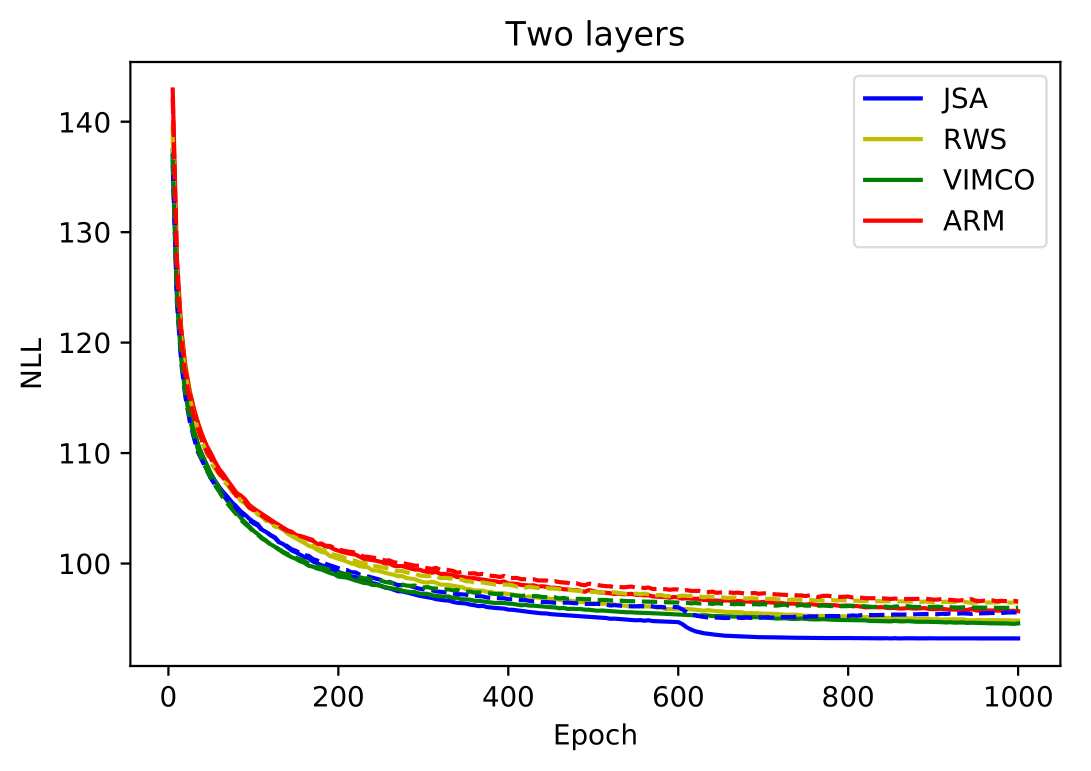} 
	\end{minipage}}
	\subfigure[NLL vs training time for ``Two layers'']{
		\begin{minipage}{0.48\textwidth}
			\centering
			\includegraphics[width=\textwidth]{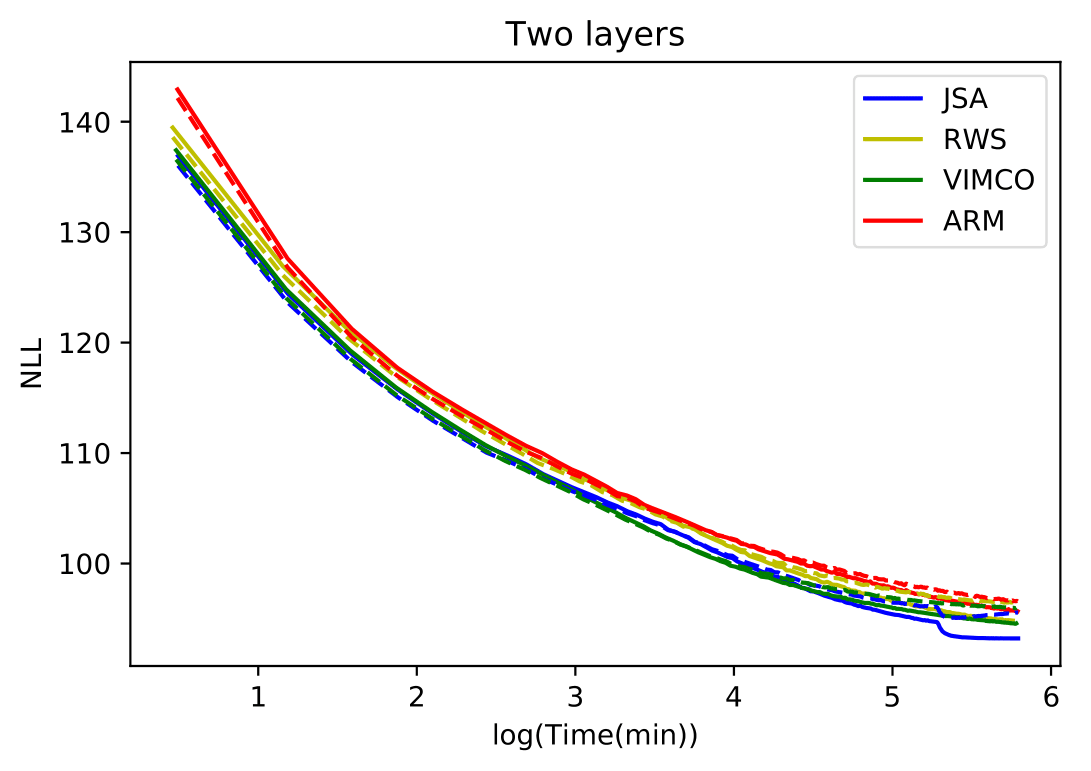} 
	\end{minipage}}
	\caption{Plots of training and testing NLL curves for training models with Bernoulli variables on MNIST.
		(a)(b)(c) are NLL against training epochs for three different network architectures, and (d) is NLL against training wall-clock times for the ``Two layers'' architecture (See Table \ref{fig:ber2} in Appendix for the other two architectures). 
		 The solid and dash lines correspond to the training and testing NLL respectively.
For completeness, we include the testing NLLs against prolonged training epochs in Figure \ref{fig:ber-NLL} and \ref{fig:cate-NLL}, which show overfitting (see Appendix \ref{sec:additional-appendix} for comments).
 }
	\label{fig:ber-NLL}
\end{figure*}

\begin{table*}[tb]
	\caption{Test NLL of different methods on generative modeling of categorical variables on MNIST. The mean and standard deviation results are computed over five independent trials with different random seeds.
	}
\centering
	\label{tab:cate}
	\begin{tabular}{lcccc}
		\toprule
		Method&VIMCO&ST Gumbel-S.&ARSM&JSA\\
		\midrule
		NLL&$79.3\pm0.5$&$82.8\pm0.2$&$78.7\pm0.2$&$75.3\pm0.3$\\
		\bottomrule
	\end{tabular}
\end{table*}

\begin{figure*}[tb]
	\subfigure[NLL vs training epochs]{
		\begin{minipage}{0.48\textwidth}
			\centering
			\includegraphics[width=\textwidth]{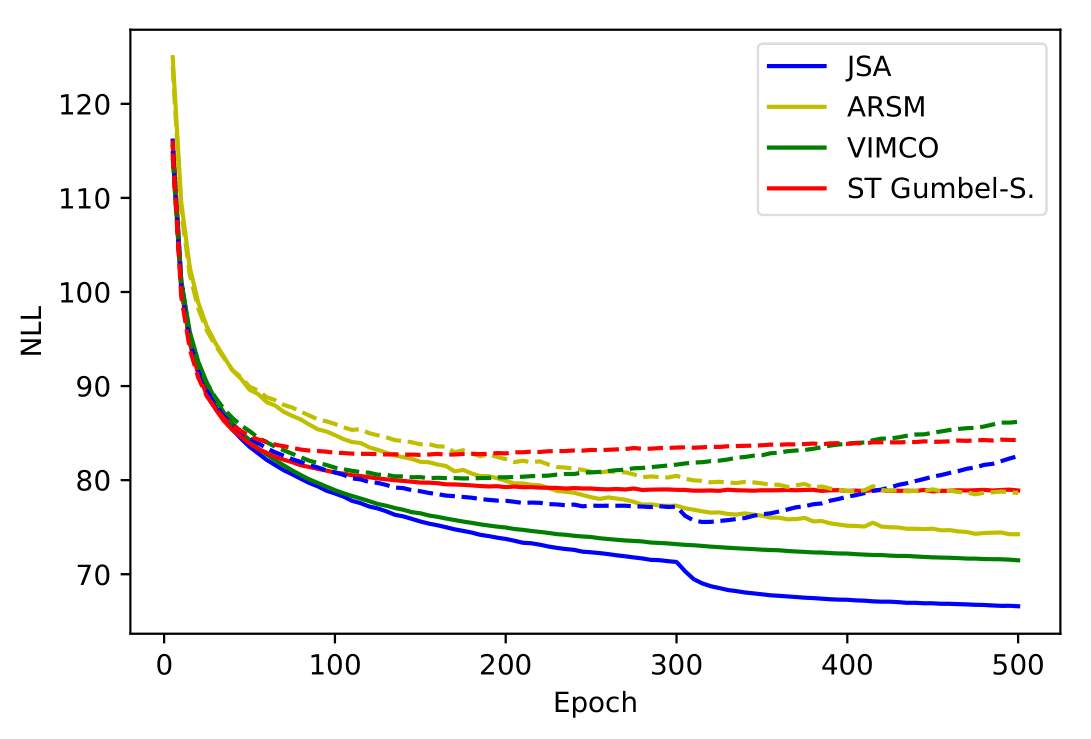} 
	\end{minipage}}
	\subfigure[NLL vs training time]{
		\begin{minipage}{0.48\textwidth}
			\centering
			\includegraphics[width=\textwidth]{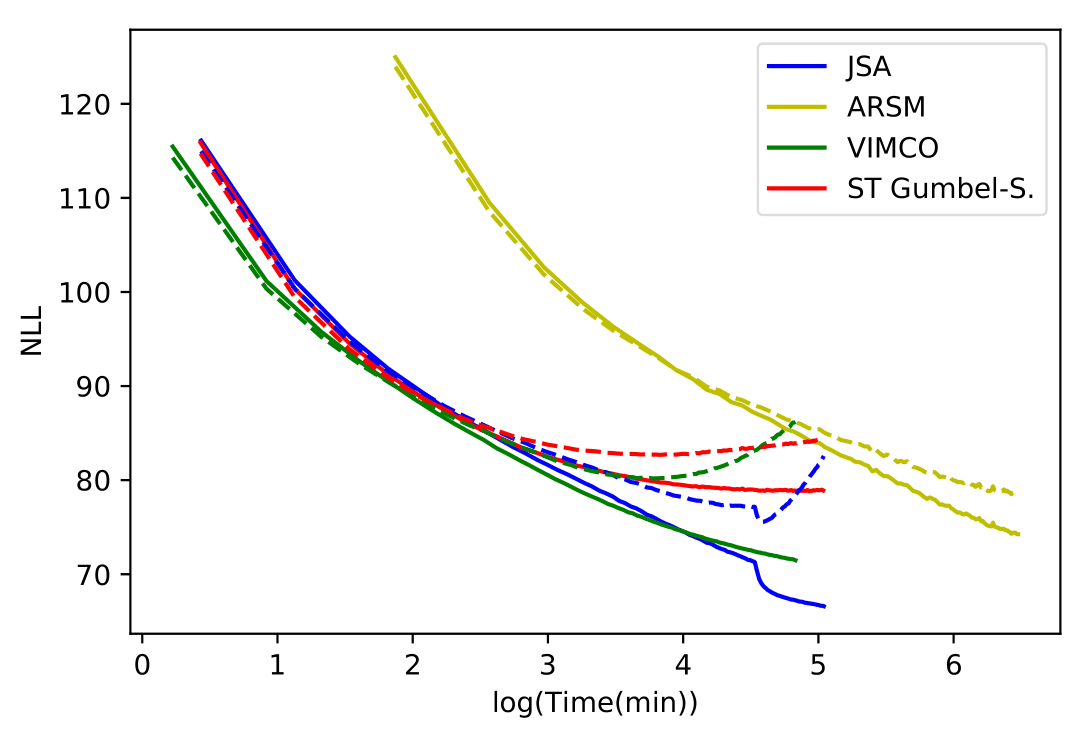} 
	\end{minipage}}
\vspace{-1em}
	\caption{Plots of training and testing NLL curves for training with categorical variables on MNIST, against (a) training epochs, (b) training wall-clock times. The solid and dash lines correspond to the training and testing NLL respectively.}
	\label{fig:cate-NLL}
\end{figure*}

\begin{figure*}[tb]
	\centering
	\subfigure{
		\begin{minipage}{0.48\textwidth}
			\centering
			\includegraphics[width=\textwidth]{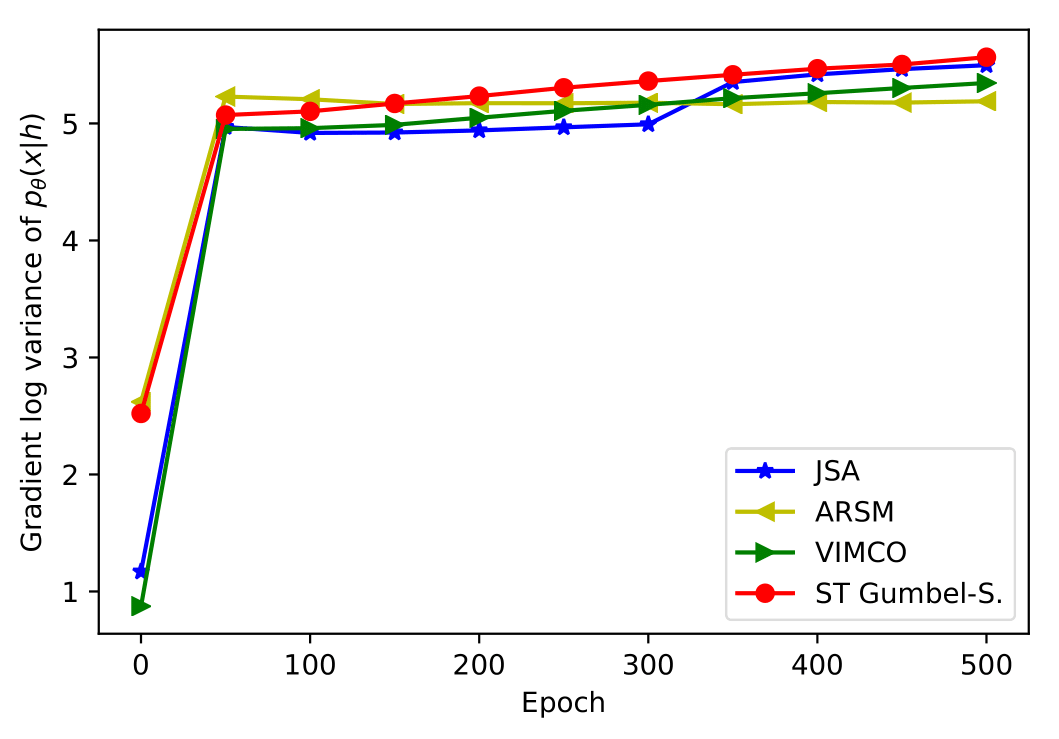} 
	\end{minipage}}
\subfigure{
	\begin{minipage}{0.48\textwidth}
		\centering
		\includegraphics[width=\textwidth]{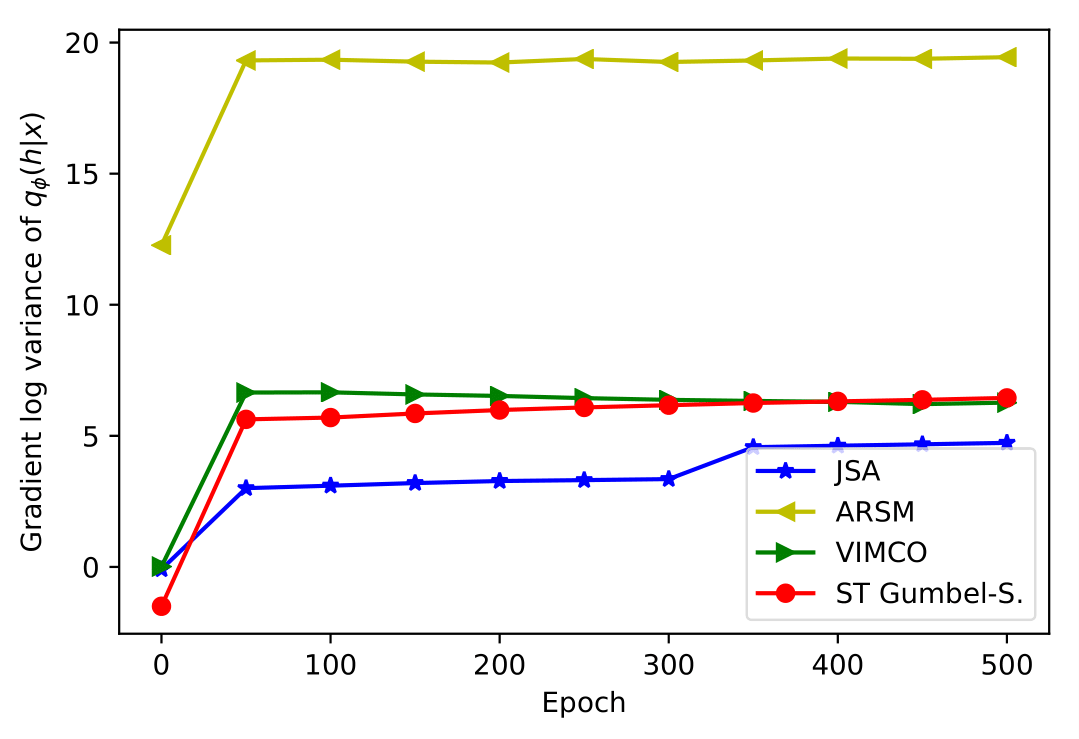} 
\end{minipage}}
\vspace{-0.8em}
	\caption{Log variance of gradient of different methods on generative modeling with categorical variables on MNIST, estimated
		every 50 (out of 500) epochs during training. Each method is computed 1000
		times with different latent samples for the first minibatch (the first 200 records) in the epoch.
		The left and right are variances of gradients w.r.t. $\theta$ and $\phi$ respectively.
		We report (the logarithm of) the sum of the variances per parameter.
	}
	\label{fig:cate-grad}
\end{figure*}

\begin{table}[tb]
	\caption{Test NLL of different methods on structured output prediction with Bernoulli variables on MNIST, where ``n'' denotes \emph{particle-number}. The mean and standard deviation results are computed over five independent trials with different random seeds.
	}
	\label{tab:half}
	\centering
	\begin{tabular}{lccc}
		\toprule
		Method&RWS&VIMCO&JSA\\
		\midrule
		n=5&$46.2\pm0.4$&$46.3\pm0.1$&$45.2\pm0.4$\\
		n=10&$44.7\pm0.1$&$46.2\pm0.2$&$44.2\pm0.1$\\
		n=20&$43.8\pm0.1$&$46.2\pm0.2$&$43.2\pm0.2$\\
		n=40&$43.2\pm0.1$&$46.1\pm0.2$&$42.9\pm0.1$\\
		n=80&$42.8\pm0.0$&$46.0\pm0.1$&$42.6\pm0.1$\\
		\bottomrule
	\end{tabular}
\vspace{-1em}
\end{table}

\subsection{CATEGORICAL LATENT VARIABLES}
\label{sec:generative-cate}
In this experiment, we evaluate various methods for training generative models with categorical latent variables. 
Following the setting of \citep{ARSM}, there are 20 latent variables, each with 10 categories, and the architectures are listed in Table \ref{tab:cate-net} in Appendix.
We compare JSA with other methods that can handle categorical latent variables, including VIMCO \citep{vimco}, straight through Gumbel-Softmax (ST  Gumbel-S.) \citep{gumbel-softmax} and ARSM \citep{ARSM}.
For JSA, ST Gumbel-Softmax and VIMCO, we use \textit{particle-number} $=20$, which yields their theoretical time complexity close to ARSM.
JSA runs with the two-stage scheme, using the first 300 epochs as stage I and the remaining 200 epochs as stage II.

We use the code from \citep{ARSM}, and implement JSA and VIMCO, making the results directly comparable.
We use a binarized MNIST dataset by thresholding each pixel value at 0.5, the same as in \citep{ARSM}. 
%During training, we use Adam optimizer with learning rate $0.0003$ and minibatch size 200.
%During testing, we use 1,000 proposal samples to estimate the NLL for each data point. 
%Using the standard training/validation/testing partition, we conduct all methods on training set, calculate the validation NLL for every 5 epochs, and report the test NLL when the validation NLL reaches its minimum within 500 epochs.

Table \ref{tab:cate} lists the test NLL results and Figure \ref{fig:cate-NLL} plots training and testing NLL against training epochs and time.
Similar to results with Bernoulli variables, JSA significantly outperforms other competitive methods.
The training time (or complexity) for JSA is comparable to VIMCO and ST Gumbel-Softmax, and ARSM costs much longer time.
Also, when JSA switches to stage II, a significant decrease of NLL is observed, which clearly shows the benefit of JSA with caching.
%The instantaneous change around 300 epoch for JSA is due to that we train JSA without cache samples for the first 300 epochs, and with cache samples afterwards.
%In the plot of test NLL, JSA shows some overfitting (as well as VIMCO), but since our goal is
%to show improved optimization, we consider regularization a separate issue outside the scope of this paper. 
Figure \ref{fig:cate-grad} plots the gradient variances of different methods.
The gradient variances w.r.t. $\theta$ for different methods are close and generally smaller than the variances w.r.t. $\phi$. Notably, the gradient variance w.r.t. $\phi$ from JSA is the smallest, which clearly validates the superiority of JSA.

\subsection{STRUCTURED OUTPUT PREDICTION}
\label{half}
Structured prediction is another common benchmark task for  latent variable models. The task is to model a complex observation $x$ given a context $c$, i.e. model the conditional distribution $p(x|c)$. We use a conditional latent variable model $p_\theta(x,h|c)=p_\theta(x|h,c)p_\theta(h|c)$, whose model part is similar to conditional VAE \citep{cvae}.

Specifically, we examine the standard task of modeling the bottom half of a binarized MNIST digit (as $x$) from the top half (as $c$), based on the widely used binarized MNIST \citep{salakhutdinov2008quantitative}.
The latent $h$ consists of 50 Bernoulli variables and the conditional prior $p_\theta(h|c)$ feeds $c$ to two deterministic layers of 200 tanh units to parameterize factorized Bernoulli distributions of $h$.
For $p_\theta(x|h,c)$, we concatenate $h$ with $c$ and feed it to two deterministic layers of 200 tanh units to parameterize factorized Bernoulli outputs.
For $q_\phi(h|x,c)$, we feed the whole MNIST digit to two deterministic layers of 200 tanh units to parameterize factorized Bernoulli distributions of $h$.

RWS, VIMCO, and JSA are conducted with the same models and training setting.
During training, we use Adam optimizer with learning rate $0.0003$ and minibatch size 100.
During testing, we use 1,000 samples from $q_\phi(h|x,c)$ to estimate $-log p_\theta(x|c)$ (NLL) by importance sampling for each data point.
%We train all methods on training set, calculate the validation NLL for every 5 epochs, and report the test NLL when the validation NLL reaches its minimum within 200 epochs.
JSA runs with the two-stage scheme, using the first 60 epochs as stage I and afterwards as stage II.

Table \ref{tab:half} lists test NLL results against different number of particles (i.e. the number of Monte Carlo samples used to compute gradients during training).
JSA performs the best consistently under different number of particles.
Both JSA and RWS clearly benefit from using increasing number of particles; but VIMCO not\footnote{Our result of VIMCO here is accordance with the analysis and result of IWAE in \citep{rainforth2018tighter}, which show that using the IW bound with increasing number of particles actually hurt model learning.
	IWAE and VIMCO are two such examples, with continuous and discrete latent variables respectively.}.

\section{CONCLUSION}

We introduce a new class of algorithms for learning latent variable models - JSA.
It directly maximizes the marginal likelihood and simultaneously minimizes the inclusive divergence between the posterior and the inference model. 
JSA couples SA-based model learning and SA-based adaptive MCMC and jointly finds the two sets of unknown parameters for the target latent variable model and the auxiliary inference model.
The inference model serves as an adaptive proposal for constructing the MCMC sampler.
To our knowledge, JSA represents the first method that couples MCMC-SAEM with adaptive MCMC, and encapsulate them through a joint SA procedure.

JSA provides a simple and principled way to handle both discrete and continuous latent variable models.
In this paper, we mainly present experimental results for learning discrete latent variable models with Bernoulli and categorical variables, consisting of stochastic layers or neural network layers.
Our results on several benchmark generative modeling and structured prediction tasks demonstrate that JSA consistently outperforms recent competitive algorithms, with faster convergence, better final likelihoods, and lower variance of gradient estimates.

JSA has wide applicability and is easy to use without any additional parameters, once the target latent variable model and the auxiliary inference model are defined.
The code for reproducing the results in this work is released at \url{https://github.com/thu-spmi/JSA}
\vspace{-1em}
\subsubsection*{Acknowledgements}

Z. Ou is also affiliated with Beijing National Research Center for Information Science and Technology.
This work was supported by NSFC Grant 61976122, China MOE-Mobile Grant MCM20170301. The authors would like to thank Zhiqiang Tan for helpful discussions.	
\vspace{-1em}
\subsubsection*{References}
\bibliographystyle{apalike}
\bibliography{JSA-uai}

\begin{thebibliography}{}\addtolength{\itemsep}{-0.2 em}

\bibitem[Andrieu and Moulines, 2006]{andrieu2006ergodicity}
Andrieu, C. and Moulines, {\'E}. (2006).
\newblock On the ergodicity properties of some adaptive {MCMC} algorithms.
\newblock {\em The Annals of Applied Probability}, 16(3):1462--1505.

\bibitem[Andrieu et~al., 2005]{andrieu2005stability}
Andrieu, C., Moulines, {\'E}., and Priouret, P. (2005).
\newblock Stability of stochastic approximation under verifiable conditions.
\newblock {\em SIAM Journal on control and optimization}, 44(1):283--312.

\bibitem[Benveniste et~al., 1990]{benveniste2012adaptive}
Benveniste, A., M{\'e}tivier, M., and Priouret, P. (1990).
\newblock {\em Adaptive algorithms and stochastic approximations}.
\newblock New York: Springer.

\bibitem[Bornschein and Bengio, 2014]{rws}
Bornschein, J. and Bengio, Y. (2014).
\newblock Reweighted wake-sleep.
\newblock {\em arXiv:1406.2751}.

\bibitem[Burda et~al., 2015]{IWAE}
Burda, Y., Grosse, R., and Salakhutdinov, R. (2015).
\newblock Importance weighted autoencoders.
\newblock {\em arXiv:1509.00519}.

\bibitem[Chen, 2002]{chen2002stochastic}
Chen, H. (2002).
\newblock {\em Stochastic approximation and its applications}.
\newblock Springer Science \& Business Media.

\bibitem[Delyon et~al., 1999]{delyon1999convergence}
Delyon, B., Lavielle, M., and Moulines, E. (1999).
\newblock Convergence of a stochastic approximation version of the {EM}
  algorithm.
\newblock {\em Annals of statistics}, pages 94--128.

\bibitem[Dempster et~al., 1977]{Dempster1977MaximumLF}
Dempster, A.~P., Laird, N.~M., and Rubin, D.~B. (1977).
\newblock Maximum likelihood from incomplete data via the {EM} algorithm.
\newblock {\em Journal of the Royal Statistical Society}, 39.

\bibitem[Fort et~al., 2003]{Fort2003OnTG}
Fort, G., Moulines, E., Roberts, G.~O., and Rosenthal, J.~S. (2003).
\newblock On the geometric ergodicity of hybrid samplers.
\newblock {\em The Annals of Applied Probability}.

\bibitem[Grathwohl et~al., 2018]{grathwohl2018backpropagation}
Grathwohl, W., Choi, D., Wu, Y., Roeder, G., and Duvenaud, D. (2018).
\newblock Backpropagation through the void: Optimizing control variates for
  black-box gradient estimation.
\newblock In {\em ICLR}.

\bibitem[Gu and Zhu, 2001]{gu2001maximum}
Gu, M.~G. and Zhu, H. (2001).
\newblock Maximum likelihood estimation for spatial models by {M}arkov chain
  {M}onte {C}arlo stochastic approximation.
\newblock {\em Journal of the Royal Statistical Society: Series B (Statistical
  Methodology)}, 63(2):339--355.

\bibitem[Gu et~al., 2015]{gu2015muprop}
Gu, S., Levine, S., Sutskever, I., and Mnih, A. (2015).
\newblock Muprop: Unbiased backpropagation for stochastic neural networks.
\newblock {\em arXiv:1511.05176}.

\bibitem[Habib and Barber, 2019]{habib2018auxiliary}
Habib, R. and Barber, D. (2019).
\newblock Auxiliary variational {MCMC}.
\newblock In {\em ICLR}.

\bibitem[Hoffman, 2017]{hoffman2017learning}
Hoffman, M.~D. (2017).
\newblock Learning deep latent {G}aussian models with {M}arkov chain {M}onte
  {C}arlo.
\newblock In {\em ICML}.

\bibitem[Jang et~al., 2017]{gumbel-softmax}
Jang, E., Gu, S., and Poole, B. (2017).
\newblock Categorical reparametrization with {G}umble-softmax.
\newblock In {\em ICLR}.

\bibitem[Jordan et~al., 1999]{ELBO}
Jordan, M.~I., Ghahramani, Z., Jaakkola, T.~S., and Saul, L.~K. (1999).
\newblock An introduction to variational methods for graphical models.
\newblock {\em Machine learning}, 37(2):183--233.

\bibitem[Kingma et~al., 2017]{Kingma2017ImprovedVI}
Kingma, D.~P., Salimans, T., and Welling, M. (2017).
\newblock Improved variational inference with inverse autoregressive flow.
\newblock {\em ArXiv:1606.04934}.

\bibitem[Kingma and Welling, 2014]{vae}
Kingma, D.~P. and Welling, M. (2014).
\newblock Auto-encoding variational {B}ayes.
\newblock {\em ICLR}.

\bibitem[Kool et~al., 2020]{Kool}
Kool, W., van Hoof, H., and Welling, M. (2020).
\newblock Estimating gradients for discrete random variables by sampling
  without replacement.
\newblock In {\em ICLR}.

\bibitem[Kuhn and Lavielle, 2004]{kuhn2004coupling}
Kuhn, E. and Lavielle, M. (2004).
\newblock Coupling a stochastic approximation version of {EM} with an {MCMC}
  procedure.
\newblock {\em ESAIM: Probability and Statistics}, 8:115--131.

\bibitem[Levy et~al., 2018]{Levy2017GeneralizingHM}
Levy, D., Hoffman, M.~D., and Sohl-Dickstein, J. (2018).
\newblock Generalizing {H}amiltonian {M}onte {C}arlo with neural networks.
\newblock In {\em ICLR}.

\bibitem[Liu, 2008]{LiuJun2008}
Liu, J.~S. (2008).
\newblock {\em Monte Carlo strategies in scientific computing}.
\newblock Springer Science \& Business Media.

\bibitem[Liu et~al., 2018]{Liu2018RaoBlackwellizedSG}
Liu, R., Regier, J., Tripuraneni, N., Jordan, M.~I., and McAuliffe, J. (2018).
\newblock Rao-blackwellized stochastic gradients for discrete distributions.
\newblock {\em ArXiv:1810.04777}.

\bibitem[Lorberbom et~al., 2018]{Lorberbom2018DirectOT}
Lorberbom, G., Gane, A., Jaakkola, T.~S., and Hazan, T. (2018).
\newblock Direct optimization through arg max for discrete variational
  auto-encoder.
\newblock {\em ArXiv:1806.02867}.

\bibitem[Maal{\o}e et~al., 2016]{Maale2016AuxiliaryDG}
Maal{\o}e, L., S{\o}nderby, C.~K., S{\o}nderby, S.~K., and Winther, O. (2016).
\newblock Auxiliary deep generative models.
\newblock {\em ArXiv:1602.05473}.

\bibitem[Maddison et~al., 2016]{concrete}
Maddison, C.~J., Mnih, A., and Teh, Y.~W. (2016).
\newblock The concrete distribution: A continuous relaxation of discrete random
  variables.
\newblock {\em arXiv:1611.00712}.

\bibitem[Mnih and Gregor, 2014]{nvil}
Mnih, A. and Gregor, K. (2014).
\newblock Neural variational inference and learning in belief networks.
\newblock {\em arXiv:1402.0030}.

\bibitem[Mnih and Rezende, 2016]{vimco}
Mnih, A. and Rezende, D.~J. (2016).
\newblock Variational inference for {M}onte {C}arlo objectives.
\newblock In {\em ICML}.

\bibitem[Naesseth et~al., 2020]{naesseth2020markovian}
Naesseth, C.~A., Lindsten, F., and Blei, D. (2020).
\newblock Markovian score climbing: Variational inference with {KL} (p||q).
\newblock {\em ArXiv:2003.10374}.

\bibitem[Neal, 1992]{SBN}
Neal, R.~M. (1992).
\newblock Connectionist learning of belief networks.
\newblock {\em Artificial intelligence}, 56(1):71--113.

\bibitem[Paisley et~al., 2012]{reinforce}
Paisley, J., Blei, D., and Jordan, M. (2012).
\newblock Variational {B}ayesian inference with stochastic search.
\newblock {\em arXiv:1206.6430}.

\bibitem[Raiko et~al., 2014]{Raiko2014TechniquesFL}
Raiko, T., Berglund, M., Alain, G., and Dinh, L. (2014).
\newblock Techniques for learning binary stochastic feedforward neural
  networks.
\newblock In {\em ICLR}.

\bibitem[Rainforth et~al., 2018]{rainforth2018tighter}
Rainforth, T., Kosiorek, A.~R., Le, T.~A., Maddison, C.~J., Igl, M., Wood, F.,
  and Teh, Y.~W. (2018).
\newblock Tighter variational bounds are not necessarily better.
\newblock In {\em ICML}.

\bibitem[Rezende and Mohamed, 2015]{Rezende2015NormalizingFlows}
Rezende, D.~J. and Mohamed, S. (2015).
\newblock Variational inference with normalizing flows.
\newblock {\em ArXiv:1505.05770}.

\bibitem[Rezende et~al., 2014]{Rezende2014StochasticBA}
Rezende, D.~J., Mohamed, S., and Wierstra, D. (2014).
\newblock Stochastic backpropagation and approximate inference in deep
  generative models.
\newblock In {\em ICML}.

\bibitem[Robbins and Monro, 1951]{SA51}
Robbins, H. and Monro, S. (1951).
\newblock A stochastic approximation method.
\newblock {\em The Annals of Mathematical Statistics}, pages 400--407.

\bibitem[Roberts and Rosenthal, 2009]{roberts2009examples}
Roberts, G.~O. and Rosenthal, J.~S. (2009).
\newblock Examples of adaptive {MCMC}.
\newblock {\em Journal of Computational and Graphical Statistics},
  18(2):349--367.

\bibitem[Salakhutdinov and Murray, 2008]{salakhutdinov2008quantitative}
Salakhutdinov, R. and Murray, I. (2008).
\newblock On the quantitative analysis of deep belief networks.
\newblock In {\em ICML}, pages 872--879.

\bibitem[Salimans et~al., 2014]{Salimans2014BridgingGap}
Salimans, T., Kingma, D.~P., and Welling, M. (2014).
\newblock Markov chain {M}onte {C}arlo and variational inference: Bridging the
  gap.
\newblock In {\em ICML}.

\bibitem[Sohn et~al., 2015]{cvae}
Sohn, K., Lee, H., and Yan, X. (2015).
\newblock Learning structured output representation using deep conditional
  generative models.
\newblock In {\em NIPS}.

\bibitem[Song et~al., 2014]{song2014weak}
Song, Q., Wu, M., and Liang, F. (2014).
\newblock Weak convergence rates of population versus single-chain stochastic
  approximation {MCMC} algorithms.
\newblock {\em Advances in Applied Probability}, 46(4):1059--1083.

\bibitem[Tan, 2017]{tan2017optimally}
Tan, Z. (2017).
\newblock Optimally adjusted mixture sampling and locally weighted histogram
  analysis.
\newblock {\em Journal of Computational and Graphical Statistics},
  26(1):54--65.

\bibitem[Tucker et~al., 2017]{tucker2017rebar}
Tucker, G., Mnih, A., Maddison, C.~J., Lawson, J., and Sohl-Dickstein, J.
  (2017).
\newblock Rebar: Low-variance, unbiased gradient estimates for discrete latent
  variable models.
\newblock In {\em NIPS}.

\bibitem[Wang et~al., 2018]{Wang2017LearningTR}
Wang, B., Ou, Z., and Tan, Z. (2018).
\newblock Learning trans-dimensional random fields with applications to
  language modeling.
\newblock {\em PAMI}, 40(4):876--890.

\bibitem[Williams, 1992]{Williams1992SimpleSG}
Williams, R.~J. (1992).
\newblock Simple statistical gradient-following algorithms for connectionist
  reinforcement learning.
\newblock {\em Machine Learning}, 8:229--256.

\bibitem[Xu and Ou, 2016]{xu2016joint}
Xu, H. and Ou, Z. (2016).
\newblock Joint stochastic approximation learning of {H}elmholtz machines.
\newblock In {\em ICLR Workshop Track}.

\bibitem[Yin et~al., 2019]{ARSM}
Yin, M., Yue, Y., and Zhou, M. (2019).
\newblock {ARSM}: Augment-reinforce-swap-merge estimator for gradient
  backpropagation through categorical variables.
\newblock In {\em ICML}.

\bibitem[Yin and Zhou, 2019]{arm}
Yin, M. and Zhou, M. (2019).
\newblock {ARM}: Augment-reinforce-merge gradient for stochastic binary
  networks.
\newblock In {\em ICLR}.

\bibitem[Ziegler and Rush, 2019]{Ziegler2019LatentNF}
Ziegler, Z.~M. and Rush, A.~M. (2019).
\newblock Latent normalizing flows for discrete sequences.
\newblock {\em ArXiv:1901.10548}.

\end{thebibliography}

\onecolumn

\appendix

\section{On convergence of JSA} 
\label{sec:convergence_jsa}
The convergence of SA has been established under conditions \citep{benveniste2012adaptive, andrieu2005stability, song2014weak}, including a few technical requirements for the mean-field function $f(\lambda)$, the transition kernel $K_{\lambda^{(t-1)}}(z^{(t-1)},\cdot)$ and the learning rates.
We mainly rely on Theorem 5.5 in \citep{andrieu2005stability} to show the convergence of JSA.

For the transition kernel in JSA, it is shown in \citep{Fort2003OnTG} that the random-scan Metropolis-within-Gibbs sampler satisfies the $V$-uniform ergodicity under some mild conditions.
The $V$-uniform ergodicity of the transition kernel is the key property for the transition kernel $K_{\lambda}$ to satisfy the drift condition, which is an important condition used in \citep{andrieu2005stability} to establish the convergence of the SA algorithm.

Specifically, we can apply Theorem 5.5 in Andrieu et al. (2005) to verify the conditions (A1) to (A4) to show JSA convergence. (A1) is the Lyapunov condition on $f(\lambda)$, which typically holds for stochastic optimization problems like in this paper, in which $f(\lambda)$ is a gradient field for some bounded, real-valued and differentiable objective functions. 
(A2) and (A3) hold under the drift condition, which is satisfied by the transition kernel in JSA as outlined above.
(A4) gives conditions on the learning rates, e.g. satisfying that $\sum_{t=0}^\infty \gamma_t = \infty$ and $\sum_{t=0}^\infty \gamma_t^2 < \infty$.

\section{Proof of the Fisher identity} 
\label{sec:Fisher}
Note that $E_{p_\theta(h|x)}\left[ \nabla_\theta logp_\theta(h|x)\right]=0$, so we have $E_{p_\theta(h|x)}\left[ \nabla_\theta logp_\theta(x,h)\right] = E_{p_\theta(h|x)}\left[ \nabla_\theta logp_\theta(x)+logp_\theta(h|x)\right] =\nabla_\theta log p_\theta(x)$.

\section{Additional comments about JSA} \label{sec:additional-appendix}

\textbf{Minibatching in JSA.~}
In our experiments, we run JSA with multiple moves \citep{Wang2017LearningTR} to realize minibatching. Specifically, we draw a minibatch of data points, say, $x_{\kappa_1},\cdots, x_{\kappa_m}$, and for each $x_{\kappa_j}, j=1,\cdots,m$, we generate multiple $h$-samples $h_{\kappa_{j},k}, k=1,\cdots,\textit{particle-number}$. In our pytorch implementation, firstly, we generate proposals for all $h_{\kappa_{j},k}, j=1,\cdots,m, k=1,\cdots,\textit{particle-number}$, according to $q_\phi$ and calculate their importance weights. This can be easily organized in tensor operations. Then we perform accept/reject to obtain the $h$-samples, which takes negligible computation. In this way, JSA can efficiently utilize modern tensor libraries.

\textbf{Comparison with \citep{naesseth2020markovian}.~}
A similar independent work pointed out by one of the reviewers is called Markov score climbing (MSC) \citep{naesseth2020markovian}.
It is interesting that MSC uses the conditional importance sampler (CIS), whereas JSA uses the random-scan sampler. We see two further differences. First, the random-scan sampler in JSA satisfies the $V$-uniform ergodicity, which ensures that the transition kernel satisfies the drift condition for the SA convergence. It is not clear if the CIS sampler satisfies the assumptions listed in \citep{naesseth2020markovian}. Second, the model setups in \citep{naesseth2020markovian} and our work are different for large-scale data. By using the random-scan sampler, JSA can support minibatching in our setup. Minibatching of MSC in the setup of \citep{naesseth2020markovian} leads to systematic errors.

\textbf{Overfitting observed in Figure \ref{fig:ber-NLL} and \ref{fig:cate-NLL}.~}
Similar to previous studies in optimizing discrete latent variable models (e.g. \cite{tucker2017rebar,Kool}), we observe overfitting in Figure \ref{fig:ber-NLL} and \ref{fig:cate-NLL}, when the training process was deliberately prolonged in order to show the effects of different optimization methods for decreasing the training NLLs. 
A common approach (which is also used in our experiments) to address overfitting is monitoring NLL on validation data and apply early-stopping. Thus, for example in Figure \ref{fig:cate-NLL}a, the testing NLL for JSA should be read from the early-stopping point shortly after epoch 300. Different methods early-stop at different epochs, by monitoring the validation NLLs as described in the 3rd paragraph in section \ref{sec:generative-ber}. For example, in training with categorical latent variables (Figure \ref{fig:cate-NLL}), JSA,  ARSM and VIMCO early-stop at epoch 320, 485 and 180 respectively.
These early-stopping results are precisely the testing NLLs reported in Table 1 and 2, from which we can see that JSA significantly outperforms other competitive methods across different latent variable models.

\section{Additional tables and figures} 

\begin{algorithm}[tb]
	\caption{SA with multiple moves}\label{alg:SA-multiple-move}
	\begin{algorithmic}	
		\FOR {$t=1,2,\cdots$}
		\STATE
		\begin{enumerate}
			\item Set $z^{(t,0)}=z^{(t-1,K)}$.
			For $k$ from $1$ to $K$,
			generate $z^{(t,k)} \sim K_{\lambda^{(t-1)}}(z^{(t, k-1)},\cdot)$,
			where $K_{\lambda^{(t-1)}}(z^{(t, k-1)},\cdot)$ is a Markov transition kernel that admits $p_{\lambda^{(t-1)}}(\cdot)$ as the invariant distribution.
			\item Set $\lambda^{(t)} = \lambda^{(t-1)} + \gamma_t \{ \frac{1}{K} \sum_{z\in B^{(t)}} F_{\lambda^{(t-1)}}(z) \}$,  where $B^{(t)} = \{ z^{(t,k)} | k = 1,\cdots,K \}$.
		\end{enumerate}	
		\ENDFOR
	\end{algorithmic}
\end{algorithm}

\begin{table}[h]
	\caption{The three different network architectures in generative modeling with Bernoulli variables on MNIST, which are the same as Table 1 in \citep{arm}. The following symbols ``$\to$'', ``]'', ``)'', and ``$\leadsto$'' represent deterministic linear transform, leaky rectified linear units (LeakyReLU) nonlinear activation, sigmoid nonlinear activation, and random sampling respectively.
	}
	\label{tab:ber-net}
	\begin{tabular}{lccc}
		\toprule
		&Nonlinear&Linear&Linear two layers\\
		\midrule
		$q_\phi(h|x)$&$784\to200]\to200]\to200)\leadsto200$&$784\to200)\leadsto200$&$784\to200)\leadsto200\to200\leadsto200$\\
		$p_\theta(x|h)$&$784\leftsquigarrow(784\leftarrow[200\leftarrow[200\leftarrow200$&$784\leftsquigarrow(784\leftarrow200$&$784\leftsquigarrow(784\leftarrow200\leftsquigarrow(200\leftarrow200$\\
		\bottomrule
	\end{tabular}
\end{table}

\begin{table}[h]
	\caption{Network architectures in generative modeling with categorical variables on MNIST, which are the same as in \citep{ARSM}. The following symbols ``$\to$'', ``]'', ``)'', ``$\leadsto$'' and ``$\hookrightarrow$'' represent deterministic linear transform, LeakyReLU nonlinear activation, sigmoid nonlinear activation, random sampling over Bernoulli distributions and random sampling over categorical distributions respectively.
	For sampling over categorical distributions, there are 20 categorical variables each with 10 categories, thus the 200 output units are divided into 20 groups each with 10 units.
	Then we apply softmax to each group, and obtain the categorical distribution of each variable.
	After obtaining random sample of each variable, we use one-hot encoding and concatenate them to obtain a 200 dimensional array.  
	}
	\label{tab:cate-net}
		\centering
	\begin{tabular}{cc}
	
		\toprule
		$q_\phi(h|x)$&$p_\theta(x|h)$\\
		\midrule
		$784\to512]\to256]\to200\hookrightarrow200$&$784\leftsquigarrow(784\leftarrow[512\leftarrow[256\leftarrow200$\\
		\bottomrule
	\end{tabular}
\end{table}

\begin{figure}[b]
	\centering
	\subfigure[NLL vs training time for ``Linear'']{
		\begin{minipage}{0.48\textwidth}
			\centering
			\includegraphics[width=\textwidth]{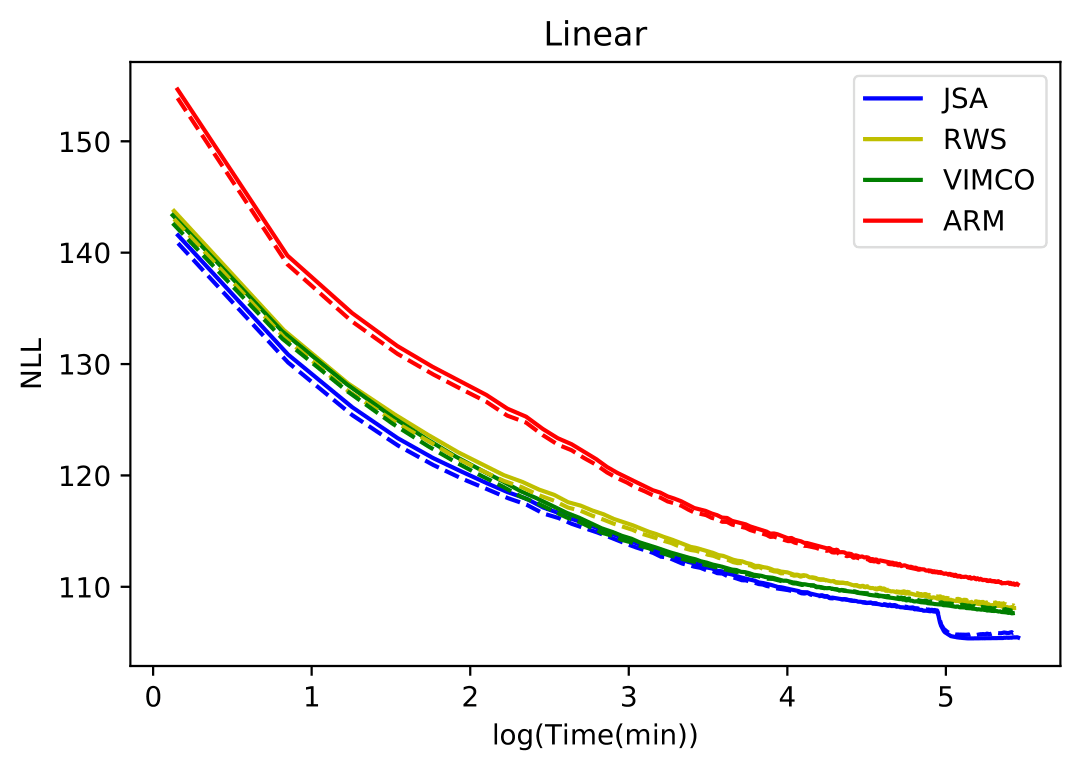} 
	\end{minipage}}
	\subfigure[NLL vs training time for ``Noninear'']{
		\begin{minipage}{0.48\textwidth}
			\centering
			\includegraphics[width=\textwidth]{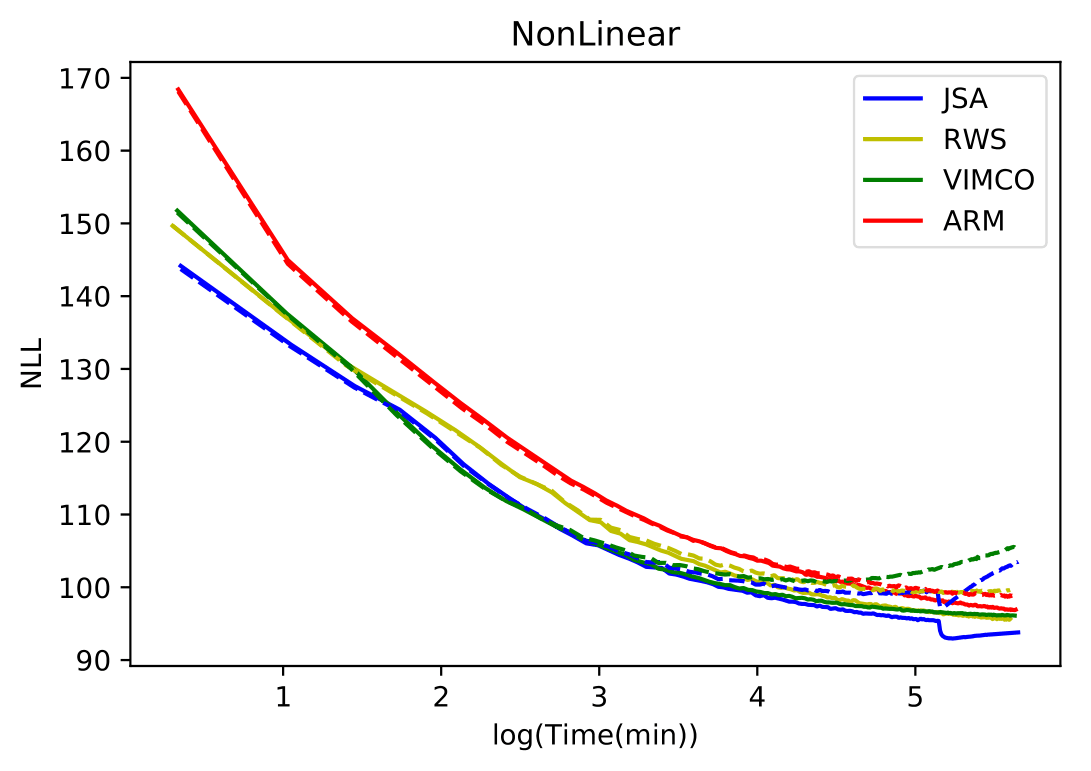} 
	\end{minipage}}
	\caption{Plots of training and testing NLL curves for training models with Bernoulli variables on MNIST, against training wall-clock times. (a)(b) are for ``Linear'' and ``Nonlinear'' architectures respectively. The solid and dash lines correspond to the training and testing NLL respectively.
	}
	\label{fig:ber2}
\end{figure}

\end{document}